\newcommand{\note}[1]{\textcolor{black}{#1}}
\title{Pushing the Limits of Machine Design: Automated CPU Design with AI}
\author{Shuyao Cheng$^{1,2}$, Pengwei Jin$^{1,2}$, Qi Guo$^{1}$, Zidong Du$^{1,3}$, Rui Zhang$^{1,3}$, Yunhao Tian$^{1,2}$, Xing Hu$^{1,2}$, Yongwei Zhao$^{1,3}$, Yifan Hao$^{1}$, Xiangtao Guan$^{1,4}$, Husheng Han$^{1,2}$, Zhengyue Zhao$^{1,2}$,  Ximing Liu$^{1,2}$, Ling Li$^{5}$, Xishan Zhang$^{1,3}$, Yuejie Chu$^{1}$, Weilong Mao$^{1}$, Tianshi Chen$^{3}$ \& Yunji Chen$^{1,2,*}$}
\begin{document}

\maketitle

\begin{affiliations}
 \item State Key Lab of Processors, Institute of Computing Technology, Chinese Academy of Sciences
 \item University of Chinese Academy of Sciences
 \item Cambricon Technologies Corporation Limited
 \item University of Science and Technology of China
 \item Institute of Software, Chinese Academy of Sciences \\
 \\
 * Correspondence should be addressed to Yunji Chen (cyj@ict.ac.cn).
\end{affiliations}

\begin{abstract}
Design activity---constructing an artifact description satisfying given goals and constraints---distinguishes humanity from other animals and traditional machines, and endowing machines with design abilities at the human level or beyond has been a long-term pursuit. Though machines have already demonstrated their abilities in designing new materials, proteins, and computer programs with advanced artificial intelligence (AI) techniques, the search space for designing such objects is relatively small, and thus, ``Can machines design like humans?" remains an open question. To explore the boundary of machine design, here we present a new AI approach to automatically design a central processing unit (CPU), the brain of a computer, and one of the world’s most intricate devices humanity have ever designed. This approach generates the circuit logic, which is represented by a graph structure called Binary Speculation Diagram (BSD), of the CPU design from only external input-output observations instead of formal program code. During the generation of BSD, Monte Carlo-based expansion and the distance of Boolean functions are used to guarantee accuracy and efficiency, respectively. By efficiently exploring a search space of unprecedented size $10^{10^{540}}$, which is the largest one of all machine-designed objects to our best knowledge, and thus pushing the limits of machine design, our approach generates an industrial-scale RISC-V CPU within only 5 hours. The taped-out CPU successfully runs the Linux operating system and performs comparably against the human-designed Intel 80486SX CPU. In addition to learning the world’s first CPU only from input-output observations, which may reform the semiconductor industry by significantly reducing the design cycle, our approach even autonomously discovers human knowledge of the von Neumann architecture.
\end{abstract}


\newpage

\newtheorem{theorem}{\bf Theorem}
\newtheorem{definition}{\bf Definition}

``The proper study of mankind is the science of design." --- Herbert A. Simon, 1969\cite{simon1969sciences}

Design is an important human activity that involves problem-solving, planning, learning, and creativity. The design process can be modeled as finding an acceptable solution satisfying given goals and constraints from a high-dimensional search space\cite{simon1969sciences,simon1995problem,bashir1999estimating}. Moving the design process to machines not only unambiguously improves industrial productivity\cite{Amarel91IJCAI}, but also drives progress towards artificial general intelligence (AGI)\cite{solomonoff1964formal}.

Machine design is made possible for creating new objects such as materials\cite{Liu2017MaterialsDA, Tabor2018AcceleratingTD, Saal2020MachineLI, Lee2022MethodologicalFF,raccuglia2016machine}, proteins\cite{russ2020evolution}, drugs\cite{Schneider2018AutomatingDD, ztrk2020ExploringCS, Vamathevan2019ApplicationsOM, Chen2018TheRO, Elton2019DeepLF, Stokes2020ADL, JimenezLuna2020DrugDW}, chip floorplans\cite{Mirhoseini21Nature}, and computer programs\cite{gulwani2017program, chaudhuri2021neurosymbolic, doi:10.1126/science.abq1158}, with advanced artificial intelligence (AI) techniques, however, the search space of design process that machines can address is still far away from human ability. To our best knowledge, the search spaces of machine-designed materials, proteins, drugs, floorplan, and computer programs are at most about $10^{48}$, $10^{125}$, $10^{300}$, $10^{2500}$, and $10^{2998}$, respectively, while the search space of the human-designed small-size MD5 program\cite{gulwani2017program} even contains more than $10^{5973}$ alternatives, let alone well-recognized most complicated human-designed objects such as flights and integrated circuits. Specifically, designing the Airbus A380 flight is estimated to explore $(4E6)! \approx 10^{2.8E7}$ alternatives assuming a two-dimensional space because it has more than $4$ million individual components\cite{Official_A380}. Moreover, designing an industrial-scale integrated circuit such as a CPU (Central Processing Unit), which is the brain of a computer consisting of millions of well-organized nanoscale transistors, with $1798$ input variables and $1826$ output variables, requires exploring $2^{2^{1798} \times 1826} \approx 10^{10^{540}}$ alternatives\footnote{The input/output variables mainly include input/output data, instruction, data registers, control/status registers (CSRs), and the program counter (PC), etc., and the size of search space is determined by the number of all possible Truth Tables.}. Due to huge gaps in terms of search spaces between machine- and human-designed objects, the question \emph{``Can machines design like humans?"}, similar to the famous question \emph{``Can machines think?"} raised by Alan Turing in the 1950s\cite{turing1950computing}, remains an open question.

To explore the boundary of machine design, we aim at endowing machines with the ability to autonomously design the CPU, whose search space is much larger than that of all existing machine-designed objects. Once machines can design such an industrial-scale CPU without human intervention, it not only advances the revolution of the semiconductor industry by significantly boosting the design efficiency but also pushes the limits of machine design to approximate human performance. More importantly, the ability to design a machine by itself, i.e., self-designing, could serve as a foundational step towards building self-evolving machines.

Actually, though modern commercial electronic design automation (EDA) tools such as logic synthesis\cite{rudell1989logic} or high-level synthesis\cite{mcfarland1990high} tools are available to accelerate the CPU design, all these tools require hand-crafted formal program code as the input. Concretely, a team of talented engineers must use formal programming languages (e.g., Verilog\cite{inc1996ieee}, Chisel\cite{6241660chisel}, or C/C++\cite{Coussy09DT,Gajski12HLS}) to implement the circuit logic of a CPU based on design specification, and then various EDA tools can be used to facilitate functional validation and performance/power optimization of the circuit logic. The above highly complex and non-trivial process typically iterates for months or years, where the key bottleneck is the manual implementation of the input circuit logic in terms of formal program code.


To automate the CPU design without human programming, we consider using partial input-output examples only as the inputs, because they are directly accessible from a large number of legacy test cases. Therefore, the problem of automated CPU design can be formulated as generating the circuit logic in the form of a Boolean function satisfying the input-output specification. Due to the unprecedented size of the search space as stated, it is quite challenging to generate the target Boolean function potentially consisting of millions of lines of symbolic formulas. In addition, the generated Boolean function is almost zero tolerance to inaccuracy\footnote{In the industrial practice of designing the Intel Pentium 4 microprocessor, one billion tests (each with $10,000$ instructions) are executed during validation, indicating that validation accuracy should be over $99.99999999999\%$\cite{Bentley01DAC}.}, otherwise, CPUs will be malfunctioned and cause a huge amount of loss. A recent case in 2017 is that the Intel Atom C2000 bug affected many famous vendors, among which Cisco prepared 125M dollars to replace related products\cite{Intel2018}. Because of the above challenges, existing AI techniques fail to automatically learn a CPU design only from input-output examples. They are only capable of generating correct circuit logic at most about 200 logic gates\cite{rai2021logic,brayton1984logic,chen2012learning,roy2021prefixrl}, which is far less than that of industrial-scale CPUs, e.g., even the Intel 80486 CPU designed in the 1990s has 1.2 million transistors, equivalent to about 300,000 logic gates\cite{collen2011computer,kumar07SSCC}.

In this article, we report a RISC-V CPU automatically designed by a new AI approach, which generates large-scale Boolean function with almost $100\%$ validation accuracy (e.g., $>99.99999999999\%$ as Intel\cite{Bentley01DAC}) from only external input-output examples rather than formal programs written by the human. This approach generates the Boolean function represented by a graph structure called Binary Speculation Diagram (BSD), with a theoretical accuracy lower bound by using the Monte Carlo-based expansion, and the distance of Boolean functions is used to tackle the intractability. By efficiently exploring a search space of unprecedented size $10^{10^{540}}$, this approach manages to automatically generate an industrial-scale RISC-V CPU design within only 5 hours. We tape out the CPU and successfully run the Linux operating system and standard benchmarks on it. The CPU performs comparably against the human-designed Intel 80486SX CPU, while the design cycle is significantly reduced by about 1000×. This is the world’s first CPU designed by AI, pushing the limit of machine design and offering clear evidence of human-like design ability. This may further reform the semiconductor industry by significantly reducing the design cycles. We also demonstrate that our approach can discover not only the general von Neumann architecture\cite{von1993first} but also fine-grain architecture optimization from scratch, which sheds some light towards the machine’s self-evolution.
\section*{Learning the circuit logic of a CPU}
We aim at learning the correct circuit logic of a CPU design directly from input-output (IO) examples, which eliminates the intensive manual work of talented experts on iterative programming and validating the circuit logic based on informal natural-language-based design specification such as ISA (Instruction Set Architecture) documents (see Fig.\ref{fig:overview}a and Fig.\ref{fig:overview}b). 

In fact, the underlying problem is: with only finite inputs and their expected outputs (i.e., IO examples) of a CPU, inferring the circuit logic in the form of a large-scale Boolean function that can be generalized to infinite IO examples with high accuracy. The Boolean function is the intuitive expression of the circuit logic (see Fig.1c), and it can represent both combinational logic and sequential logic, which consists of combinational logic with registers. The controller of registers can also be represented by combinational logic. Therefore, with extra inner states of the registers, the entire CPU design consisting of combinational logic and sequential logic can be represented by Boolean functions.

\begin{figure}[t]
  \centering
  \includegraphics[width=.6\columnwidth]{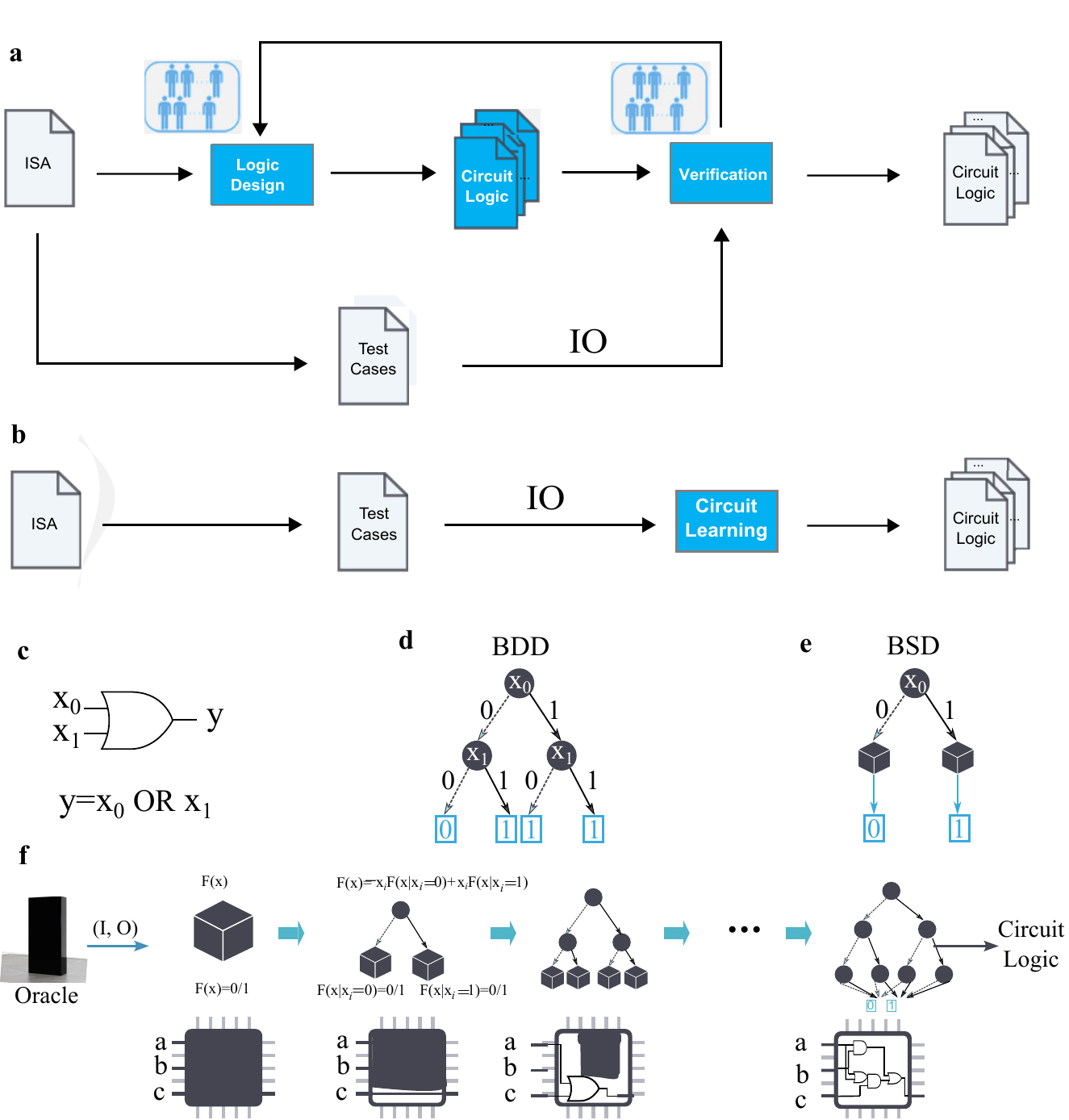}
  \caption{\textbf{CPU learning flow, circuit logic representation, and detailed learning process.} \textbf{a}, The traditional CPU design flow. \textbf{b}, The proposed flow to learn the CPU design from informal input-output examples. \textbf{c}, An example of the target circuit logic (i.e., OR gate) and the equivalent Boolean function. \textbf{d}, The graph representation of the Boolean function in the form of a Binary Decision Diagram (BDD). \textbf{e}, The graph representation of the Boolean function in the form of the proposed Binary Speculation Diagram (BSD). \textbf{f}, The detailed learning process. With only IO examples, a BSD of the target Boolean function is gradually generated from the root node to the leaf nodes, where the Monte Carlo-based policy is used to guide the generation process.}
  \label{fig:overview}
\end{figure}


One of the most well-known and efficient data structures for representing large-scale Boolean functions is Binary Decision Diagram (BDD), a rooted, directed acyclic graph (DAG) which consists of internal decision nodes and leaf nodes\cite{drechsler2013binary}. The internal decision node indicates a Boolean variable with the assignment of value 0 or 1 to its two child nodes, and the leaf nodes are labeled 0 and 1 (see Fig.\ref{fig:overview}d). The key advantage of BDD over other representations such as truth table, Karnaugh maps\cite{Brown1990Boolean}, or canonical sum-of-products form\cite{geetha2015network} is that BDD can reduce the description complexity of general Boolean functions from exponential to polynomial\cite{drechsler2013binary}. As a result, we reformulate this problem to \emph{generate an equivalent BDD representation of the target Boolean function}. 

This problem completely differs from conventional BDD generation problems tackled in existing EDA tools. The conventional BDD generation problem, such as logic synthesis, takes the formal circuit design, typically in terms of Verilog, as the input and generates the design implementation in terms of logic gates. Since existing BDD generation problems require the input Boolean function, they fail to address our problem. To address this problem, we propose to speculatively generate the final BDD with a new data structure called Binary Speculation Diagram (BSD) (see Fig. 1e), which represents a Boolean function by replacing the entire sub-trees (nodes and their child nodes) of conventional BDDs directly with a speculated constant 0 or 1. Such a feature of BSD enables fast and accurate simulation of large-scale Boolean functions with a much smaller size than conventional BDD. Nevertheless, the output of our problem can be treated as the input of traditional BDD generation problems such as logic synthesis, which can be further optimized by existing EDA tools such as the Synopsys Design Compiler.

With the help of the proposed BSD, this problem can be solved with a generating-by-searching regime; that is, a BSD to approximate the target Boolean function is gradually generated by searching the complicated relation of corresponding Boolean variables. The detailed flow of our approach is illustrated in Fig.\ref{fig:overview}f by using a $1$-bit circuit as an example. Initially, the generated BSD is empty, and hence it simulates the target circuit with speculated output value of constant $1$ or $0$, i.e., $\mathcal{F}(\mathtt{x})=1~or~0$, where $\mathcal{F}()$ is the corresponding Boolean function and $\mathtt{x}$ is the input. Then one bit is picked from the input $\mathtt{x}$, e.g., $x_i$, as the root node, and then it is expanded into two leaf nodes. The expansion follows the Boolean Expansion Theorem\cite{boole1854}: left node with $x_i=0$ and right with $x_i=1$. Hence the corresponding Boolean function of the generated BSD can be re-written as $\mathcal{F}(\mathtt{x})=\overline{x_i}\mathcal{F}(\mathtt{x}|x_i=0)+x_i\mathcal{F}(\mathtt{x}|x_i=1)$. The outputs of the two leaf nodes, $\mathcal{F}(\mathtt{x}|x_i=0)$ and $\mathcal{F}(\mathtt{x}|x_i=1)$, are also set to constant $0$ or $1$ speculated through the Monte Carlo-based policy. Following the basic idea of the Monte-Carlo Simulation, we sample n input-output pairs for every node. If all outputs of such IO pairs are 0, or all of such outputs are 1, the node is speculated as the constant function 0/1. The expansion on leaf nodes is repeated iteratively until the final BSD, which is a BDD as well, is generated. Note that along with the expansion, the generated BSD gets deeper, the unknown part of the target circuit gets smaller, the corresponding circuit logic gets more complex, and $\mathcal{F}(\mathtt{x})$ simulates the target circuit more accurately (proved in Methods). 

\section*{Addressing accuracy and scalability challenges}

During the generation of the BSD of the target Boolean function, our approach addresses two key challenges: (1) Accuracy: the inferred Boolean function must ensure extremely high accuracy for both given finite IO examples (i.e., $100\%$) and unseen infinite IO examples (e.g., $>99.99999999999\%$) and (2) Scalability: the inferred Boolean function might consist of millions of lines of symbolic formulas, so as to represent large-scale circuits such as CPU. These challenges are fully addressed by using the Monte Carlo-based expansion and the Boolean-distance-based node reduction, respectively. Because of these challenges, the proposed process cannot be addressed by the current EDA tools.

The first accuracy challenge, which must guarantee high accuracy for both given finite IO examples and unseen infinite IO examples, is addressed by using the Monte Carlo-based policy during the expansion of BSD. Concretely, whether a leaf node of the BSD will be expanded is based on two conditions: (1) 100\% correct on the already given IO examples, and (2) almost 100\% correct (i.e., $1-\epsilon$) on newly sampled IO examples\footnote{The traditional formal/semi-formal methods, which can guarantee 100\% accuracy, cannot be applied to our problem, because they require formal specification instead of input-output examples as the input. Nevertheless, our method can work well with existing formal/semi-formal EDA tools such as Formality.}.
The first condition guarantees that the generated BSD works definitely correctly on the given IO examples. The second condition guarantees a high accuracy when generalizing to unknown IO examples. The reason is that the IO examples randomly sampled by the Monte Carlo-based policy naturally capture the property of the target circuit. In other words, when the generated BSD can perform well on the randomly sampled IO examples, it should be able to satisfy the unknown IO examples. Besides, the leaf nodes of the current layer (e.g., $k$-th layer) can always be expanded to that of the $(k+1)$-th layer for higher accuracy. It can be proved that the accuracy of BSD expanded to the $(k+1)$-th layer will be no less than that of BSD expanded to the $k$-th layer, i.e., $Acc(\mathcal{F}_{k+1}) \ge Acc(\mathcal{F}_{k})$, where $\mathcal{F}_k$ is the Boolean function of the BSD when expanded to the $k$-th layer (see Theorem \ref{thm:decomacc} in Methods). Thus, the accuracy is boosted gradually during the process of hierarchical expansion. By increasing the input-output examples, including the random samples and samples with test errors, the accuracy of the automatically designed circuit can keep increasing.

The second scalability challenge, which requires discovering Boolean functions with millions of lines of symbolic formulas, is addressed by using the Boolean-distance-based node reduction. For the target CPU, the number of BSD nodes is reduced from $\sim10^{10^{540}}$ to $\sim10^6$. Roughly, the Boolean distance is used to measure the similarity between nodes (see Definition 2 in Methods), and those similar nodes are clustered together for potential node merging so as to significantly reduce the number of nodes. Note that the scalability challenge is further exacerbated by satisfying the stringent accuracy constraints since the high accuracy of the BSD is achieved by iteratively expanding its leaf nodes. 

\begin{figure}[t]
   \centering
   \includegraphics[width=1.\columnwidth]{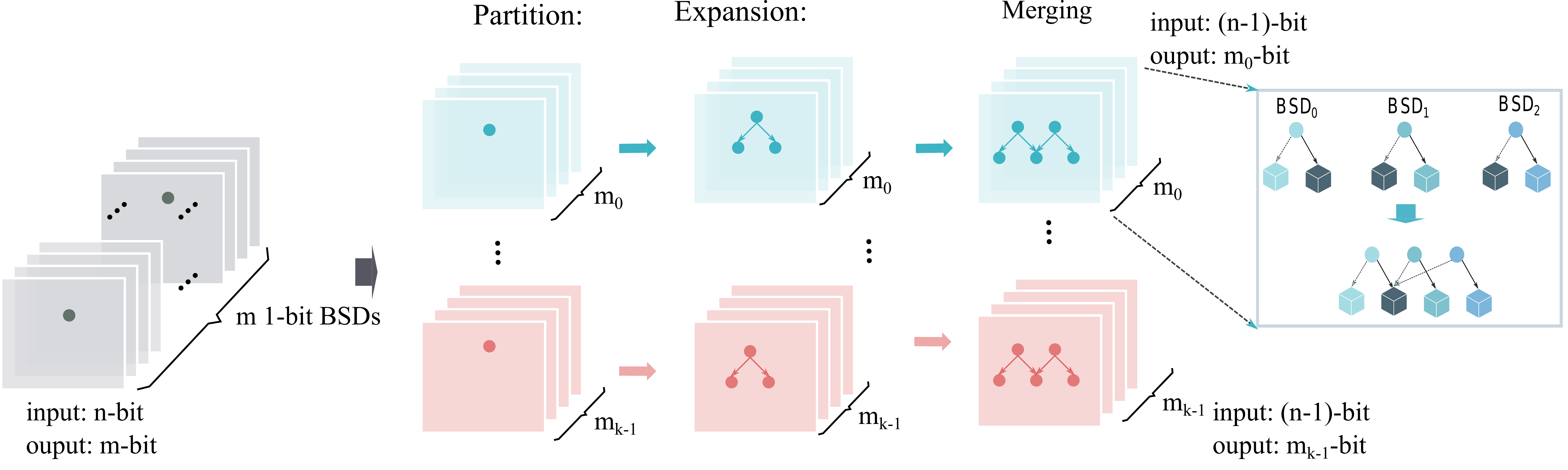}
   \caption{\textbf{One iteration of the proposed node-reduction policy.} The node reduction includes three stages, i.e., partition, expansion, and merging. In the partition stage, $m$ $1$-bit BSDs are grouped into $k$ clusters based on calculated Boolean distance. In the expansion stage, BSDs within a cluster are expanded with the same input bit to improve efficiency. In the merging stage, leaf nodes of BSD within a cluster, which perform exactly the same on IO examples, are merged.}
   \label{fig:nc}
\end{figure}

The proposed Boolean-distance-based node reduction consists of three key stages, including partition, expansion, and merging (see Fig.\ref{fig:nc}). The first partition stage groups similar BSDs ($m$-bit output circuit corresponds to $m$ $1$-bit BSDs) into the same cluster based on calculated Boolean distance. The BSDs grouped into one cluster share the same expansion order (i.e., expanding the leaf node with which input bit) to improve efficiency. Moreover, leaf nodes of BSDs within the same cluster have a high possibility for future node merging. The second expansion stage determines the expansion orders by measuring the Hamming distance---a natural similarity measure---between BSDs expanding on different input bits and randomly sampled IO examples. The expansion order is an important attribute of BDD, which determines the concrete BDD representation of the logic function. By comparing such distances, the input bit, which reduces the error the most, is selected as the root node for expansion. The third merging stage checks the leaf nodes of BSDs within the same cluster and then merges leaf nodes that perform exactly the same on the given IO examples and newly sampled IO examples. With two functionally equal nodes merging together once, all their son nodes are all considered functionally equivalent and only need to construct the same function once. Therefore, the total number of nodes in the BSD is significantly reduced with finite merging operations. Though the node similarity calculated based on Monte Carlo sampling is not exactly accurate and may bring errors after merging, we can guarantee that the merging error converges to zero when increasing the number of samples (see Theorem 2 in Methods).

\section*{Evaluation}
We demonstrate the capability of our approach by automatically generating a CPU design. The CPU with RISC-V 32IA instruction set is generated from a relatively small set of IO examples in less than $5$ hours, which can run the Linux operating system successfully and perform comparably against a human-designed CPU.

Concretely, the CPU has $1789$ input bits and 1826 output bits, and thus the total number of IO examples is $1826 \times 2^{1798}$, while only less than $2^{40}$ IO examples are randomly sampled for training. Also, the input-outputs of legacy programs are high-quality, while small test cases are used as training examples. The training process takes less than $5$ hours to achieve the accuracy of $>99.99999999999\%$ for validation tests. The generated CPU design then undergoes the physical design process with scripts at $65$nm technology to generate the layout for fabrication. Fig.\ref{fig:layout} illustrates the layout of the entire chip with major components marked, the manufactured chip with a frequency of $300$ MHz, and the printed circuit board containing the chip. Although we demonstrate the capability of our method with the RISC-V32IA instruction set, it is able to generate logic circuits of other CPU designs with different instruction sets as long as we can obtain the input-output examples as the input.

\begin{figure}[t]
  \centering
  \includegraphics[width=0.6\columnwidth]{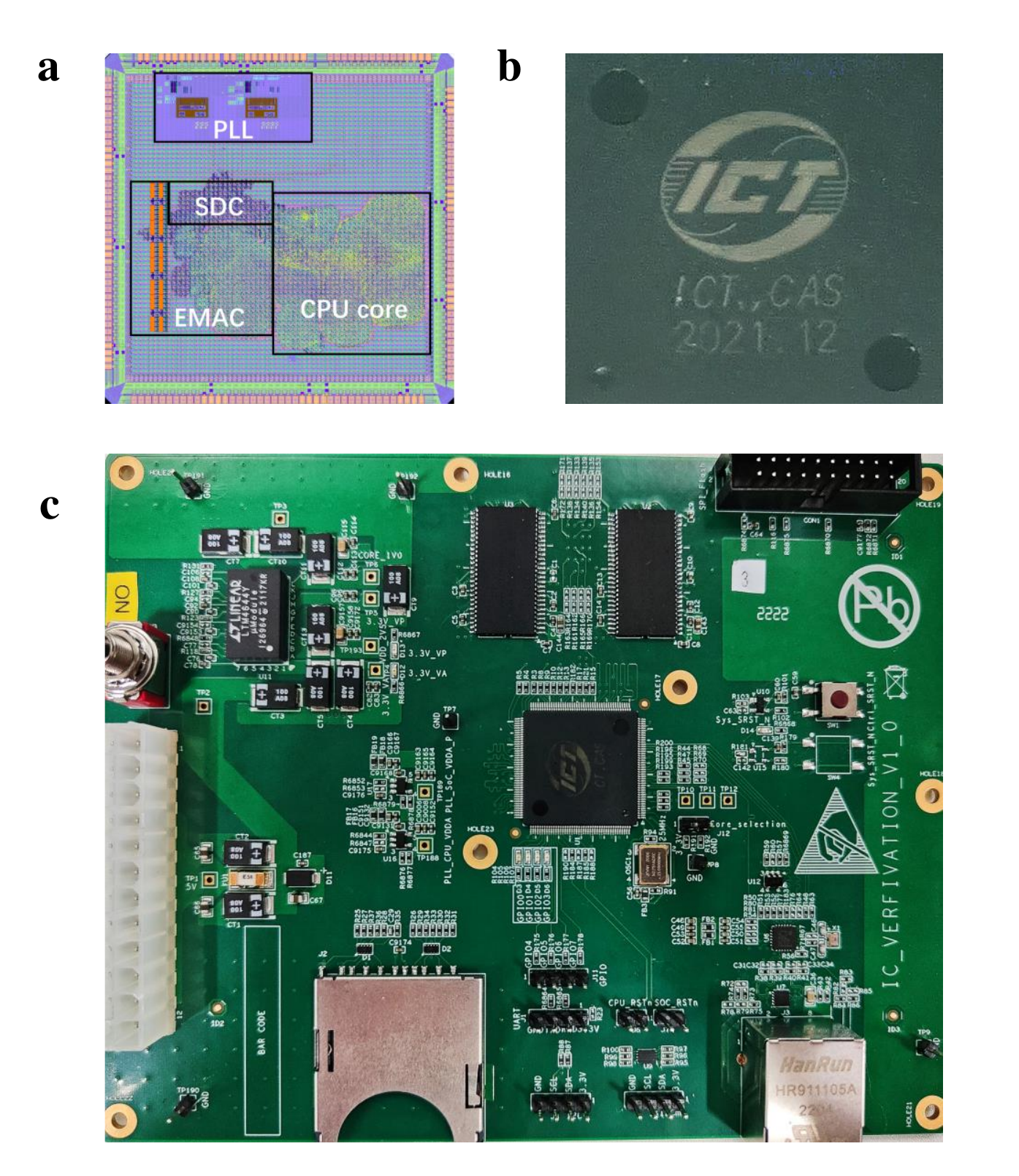}
  \caption{\textbf{The layout, manufactured chip, and printed circuit board.} \textbf{a}, the layout of the entire chip includes the core part of the CPU (CPU core), Phase Locked Loop for clock generation (PLL), Ethernet Media Access Controller module (EMAC), and SD controller (SDC). \textbf{b}, The manufactured chip after packaging can operate at a frequency of $300$ MHz. \textbf{c}, The printed circuit board containing the chip for functionality and performance evaluation.}
  \label{fig:layout}
\end{figure}

We successfully run the Linux (kernel 5.15) operating system and SPEC CINT 2000\cite{henning2000spec} on the generated CPU (i.e., CPU-AI) to validate the functionality (see Fig.\ref{fig:func_perf}a). We also use the widely-used Dhrystone\cite{weicker1984dhrystone} to evaluate the performance of CPU-AI. Fig.\ref{fig:func_perf}b compares the performance of CPU-AI against different generations of commercial CPUs, e.g., Intel 80386 (1980s), Intel 80486SX (1990s), and Intel Pentium III (2000s). On the evaluated program, it performs comparably to Intel 80486SX, designed in mid-1991. Though CPU-AI performs worse than recent processors such as Intel Core i7 3930K, it is the world’s first automatically designed CPU, and its performance could be significantly improved with augmented algorithms, which will be elaborated on later. 

\begin{figure}[t]
  \centering
  \includegraphics[width=\columnwidth]{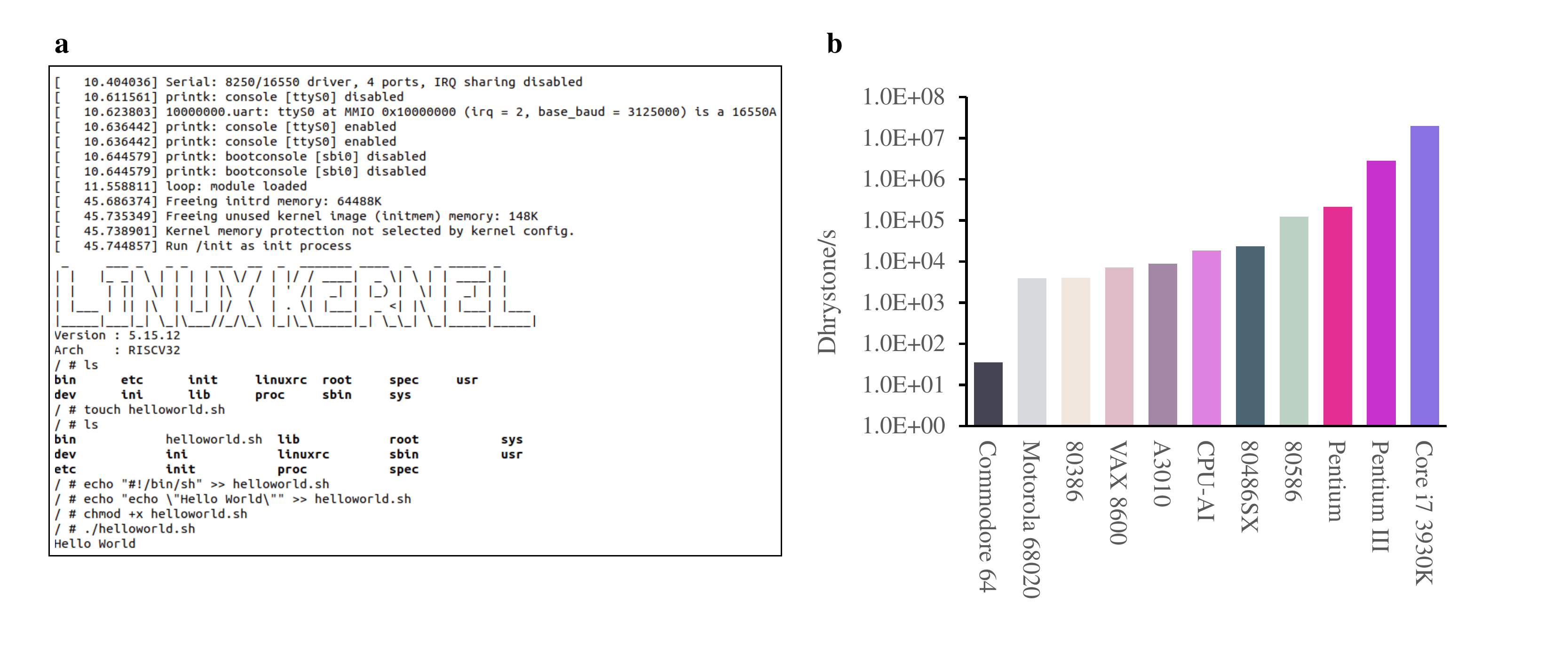}
  \caption{\textbf{Functional validation and performance comparison.} \textbf{a}, The outputs of booting up the Linux operating system. \textbf{b}, The performance of generated CPU (i.e., CPU-AI) is compared against commercial CPUs on the Dhrystone benchmark, and CPU-AI performs comparably to the human-designed Intel 80486SX CPU.}
  \label{fig:func_perf}
\end{figure}

To demonstrate that the proposed method reforms the CPU design flow and even potentially the semiconductor industry, we further compare the development costs of CPU-AI and human-designed CPU (i.e., CPU-Man), where the internal registers of CPU-Man exhibit exactly the same behavior as the CPU-AI based on the same instruction set specification. The CPU-Man takes $\sim5000$ man-hours to complete the entire design, while the CPU-AI only takes less than $5$ hours to obtain the design by training from the IO examples. The reduction of development costs is further validated by the design and verification costs of an Intel 486-compatible CPU, K486, which takes more than $190$ days (i.e., $4560$ hours) merely for the verification process\cite{yim1997design}. The underlying reason is that manual programming and verification of circuit logic, which consume more than $60\%$-$80\%$ of the design time and resources\cite{Bergeron12WT} in the traditional CPU design flow, is completely eliminated.

\section*{Discovering the von Neumann architecture}
By detailing the generated circuit logic of the taped-out CPU, we demonstrate that our approach discovers human knowledge of von Neumann architecture only from the input-output examples. Concretely, the generated CPU in terms of BSD is the key component of the von Neumann architecture, which mainly consists of the control unit, which is generated first in the BSD for global control, and the arithmetic unit (see Fig.\ref{fig:knowledge}a). The control unit generates controlling signals for the entire CPU, and the arithmetic unit accomplishes arithmetic operations (e.g., ADD and SUB) and logic operations (e.g., AND and OR). Moreover, we observe that both the control unit and arithmetic unit can be recursively decomposed into smaller functional modules such as instruction decoder, ALU, and LSU (load/store unit) by expanding more BSD layers (see Fig.\ref{fig:knowledge}b).

In addition to the above coarse-grain architecture, fine-grained architecture optimization can also be generated by using our approach. We use the well-known pipelining as an illustrative example. We provide more input and output variables, which are derived from the internal state of a pipelined CPU design, for training the proposed algorithm. Experimental results show that the algorithm is able to generate a CPU running at a higher frequency (i.e., 600MHz) by using the states of a 2-stage pipeline, along with the original input-output examples. The next step is to augment the algorithm to automatically generate such architectural modules without prior knowledge, i.e., additional internal states.

The proposed approach also has the potential to generate more aggressive (even unknown) fine-grained architectural optimization. An initial attempt is to apply fine-grained pipelining to a small number, even a single gate instead of the entire functional module, which is difficult for human designers while avoiding potential false data dependency for better throughput. Thus, by augmenting the proposed approach, we might endow the CPU with the ability to constantly improve its performance, which sheds some light towards the machine's self-evolution.

\begin{figure}[t]
  \centering
  \label{fig:knowledge}
  \includegraphics[width=1.0\columnwidth]{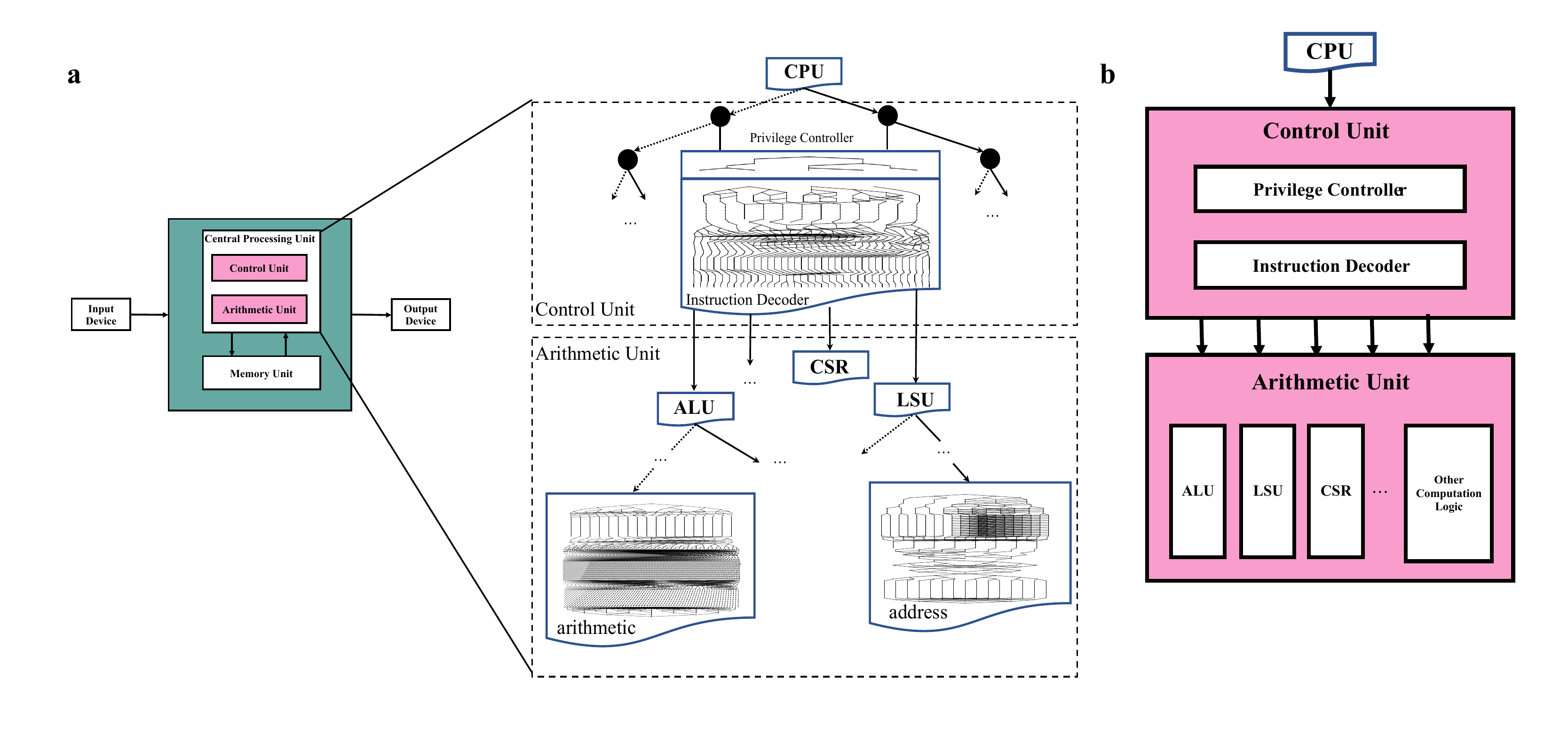}
  \caption{
    \textbf{Discovering the von Neumann architecture from scratch.} 
    \textbf{a}, The general von Neumann architecture can be discovered from the generated BSD, which mainly consists of the control unit and arithmetic unit. \textbf{b}, The control unit and arithmetic unit can be further decomposed into sub-modules in the BSD, e.g., the control unit contains the privilege controller and instruction decoder, the arithmetic unit contains ALU, LSU, and CSR (control and status registers), etc.}
  \label{fig:knowledge}
\end{figure}

\section*{Conclusion}
In this work, we propose a novel AI approach to obtain the world’s first automatically generated CPU, pushing the limits of machine design by exploring a search space of unprecedented size $10^{10^{540}}$, which is the largest one of all machine-design objects to our best knowledge, including materials, proteins, drugs, chip floorplans, and computer programs. The proposed approach generates the circuit logic of the CPU design in the form of a large-scale Binary Speculation Diagram (BSD) with $>99.99999999999\%$ accuracy from only external input-output observation. During the generation of BSD, the Monte Carlo-based expansion and the distance of Boolean functions are used to guarantee accuracy and efficiency, respectively. The generated CPU successfully runs the Linux operating system and performs comparably against the human-designed Intel 80486SX CPU. Moreover, compared to human-designed CPUs, our approach reduces the design cycle by about $1000\times$ because the manual programming and verification process of traditional CPU design is completely eliminated. Thus, our approach changes the traditional CPU design flow and potentially reforms the semiconductor industry. 

\note{In addition to offering human-like design abilities, our approach discovers human knowledge of von Neumann architecture. Our approach also has the potential to generate aggressive (even unknown) architecture optimization, which sheds some light on building a self-evolving machine to beat the latest CPU designed by humanity eventually. }



\newpage

\newpage
\begin{methods}
We introduce the detailed methodology of the automated CPU design. 

\section*{Problem definition}
The underlying problem of automated CPU design is: with only \emph{finite} IO examples, 
infer the digital circuits in the form of \emph{Boolean function} that can be generalized to infinite IO examples with high accuracy. The intuition to use finite IO examples is that it is impossible to obtain all IO examples for large-scale circuits as their number increases exponentially ($2^n$ for $n$-bit input circuits). Moreover, the inferred Boolean function can represent both the combinational logic and sequential logic, as the sequential logic can be simply implemented as a combinational logic with registers. This problem can be generally formulated as follows:

\begin{definition}[Automated Chip Design]\label{def:ucl}
There is an \emph{Oracle} $\phi : \left\{0, 1\right\}^n \mapsto \left\{0, 1\right\}^m$ which has finite circuit complexity (defined by the circuit size) $C_\Omega(\phi) = c$. Given at most $N$ probes to the oracle $\left\{\left(\mathbf{x}_1, \phi(\mathbf{x}_1)\right), \left(\mathbf{x}_2, \phi(\mathbf{x}_2)\right),\right.$ $\dots,$ $\left.\left(\mathbf{x}_N, \phi(\mathbf{x}_N)\right)\right\}$, construct a Boolean function $\psi$ to simulate $\phi$, such that for a fraction of $1-\epsilon$ $(\epsilon \rightarrow 0)$ samples $\mathbf{x}$ in the input space $\left\{0, 1\right\}^n$, $\phi (\mathbf{x}) = \psi (\mathbf{x})$. 
\end{definition}


The problem of automated chip design exhibits the following two challenges: (1) 
Accuracy: The inferred Boolean function must ensure the correctness for the given IO examples with certainty and high accuracy for unseen IO examples (e.g. $99.99999999999\%$), and (2) 
Scalability: The inferred Boolean function has a comparable scale with an industrial CPU, which might consist of millions of lines of formulas. 

\section*{Overview of our approach}

As stated, achieving automated chip design confronts both accuracy and scalability challenges. To solve this problem, we propose the Binary Speculation Diagram (BSD), an equivalent form of a Boolean function but with a hierarchical structure for searchability, to represent the circuit logic of CPU design. Then, we reformulate this problem to generate the corresponding BSD with negligible accuracy loss and incurring a small computing overhead. Specifically, the BSD structure is shown in Fig.\ref{fig:overview}c. BSD is a rooted directed acyclic graph with non-leaf (decision) nodes and leaf nodes. Each non-leaf node represents a selected input bit, and each edge represents a specific assignment of $0$ or $1$ for its non-leaf source node; each leaf node represents a speculated $1$-bit constant Boolean function, and each node with all its sub-nodes represents a $1$-bit Boolean function. With the expansion of the leaf nodes, the generated BSD gets deeper, the unknown part of the target circuit gets smaller, the corresponding circuit logic gets more complex, and the generated BSD simulates the target circuit more accurately. The BSD node sizes indicate how large the circuit area is overhead, and it defines the circuit complexity.  

However, generating a CPU-scale BSD remains significantly challenging due to the searching space explosion. Hence, we propose the Boolean-distance-based node reduction method to address the scalability challenge while maintaining extremely high accuracy, which includes three iterative stages: \emph{partition}, \emph{expansion}, and \emph{merging}, as shown in Fig.\ref{fig:nc}.  In the simplified example to build a circuit with $n$-bit inputs and $m$-bit outputs, the BSD is initialized with $m$ leaf nodes, i.e. $m$ $1$-bit constant Boolean functions of speculated output values, corresponding to $m$-bit outputs of the target circuit. The \emph{partition} stage groups the outputs of the target circuit into several clusters sharing the same expansion order for improving the generation efficiency of BSD. In the \emph{expansion} stage, during the $(k+1)$-th expansion process, each leaf node of the $k$-th layer is expanded into two new leaf nodes following the Boolean Expansion Theorem.
The expansion process based on the Monte Carlo-based policy guarantees that the accuracy of learned BSD on unseen IO examples definitely improves. 
The \emph{merging} stage reduces the scale of the generated BSD by merging leaf nodes with the exact same function for given IO examples and newly sampled IO samples. 
The detailed algorithm of these three stages is described as follows, along with the accuracy and performance analysis.

\section*{Detailed algorithm}
In this section, we present the detailed stages of each iteration for generating the BSD.

\subsection{Partition.}
The \emph{partition} stage groups similar BSD nodes into one cluster, aiming to reduce the computing overhead of BSD generation and increase the potentiality for merging leaf nodes.
Concretely, in each iteration of building the BSD, before expanding the nodes in the $k$-th layer, the partition stage first separates BSD nodes into several clusters, and BSD nodes in the same cluster will share the same expansion order in the following \emph{expansion} stage. BSD nodes in two different clusters are sufficiently different from the circuit view, and thus expanding them with the same expansion ordering will lead to a BSD structure with larger sizes.
The cluster grouping is based on a special distance, named Boolean distance, which indicates the similarity of two circuits. Boolean distance is calculated based on the circuit complexity, as explained  in Definition \ref{def:mcd}. 
Considering that the nodes in the $k$-th layer are currently approximated with the constant function whose values are decided by IO examples from Monte Carlo sampling, the circuit complexity of the nodes is also approximated with IO examples.

\begin{definition}[Boolean distance]\label{def:mcd}
Given two Boolean functions, $f$ and $g$, the Boolean distance $Dist(f,g)$ is calculated based on the circuit complexity of $f$, $g$, and their combination $\tau$.
\begin{equation}
\mathit{Dist}(f,g) = C_\Omega(f) + C_\Omega(g) - C_\Omega(\tau),
\end{equation}
where $C_\Omega(\cdot)$ is the circuit complexity of a Boolean function, $\tau$ is the combination of $f$ and $g$, the input bits of $\tau$ are the union of input bits of $f$ and $g$, the output of $\tau$ is the concatenation of the output bits of $f$ and $g$, formulated as
\begin{equation}
\tau(\mathbf{x}_{f\backslash g}, \mathbf{x}_{g \backslash f}, \mathbf{x}_{f\cap g}) = (f(\mathbf{x}_{f}), g(\mathbf{x}_{g})),
\end{equation}
where $\mathbf{x}_{f \backslash g}$ are the input bits that in $f$ but not in $g$, $\mathbf{x}_{g\backslash f}$ are the input bits that in $g$ but not in $f$, $\mathbf{x}_{f\cap g}$ are the input bits that in both $f$ and $g$, $(\cdot,\cdot)$ means the concatenation of the output bits of two Boolean functions.

\end{definition}

\begin{figure}[t]
   \centering
   \includegraphics[width=.75\columnwidth]{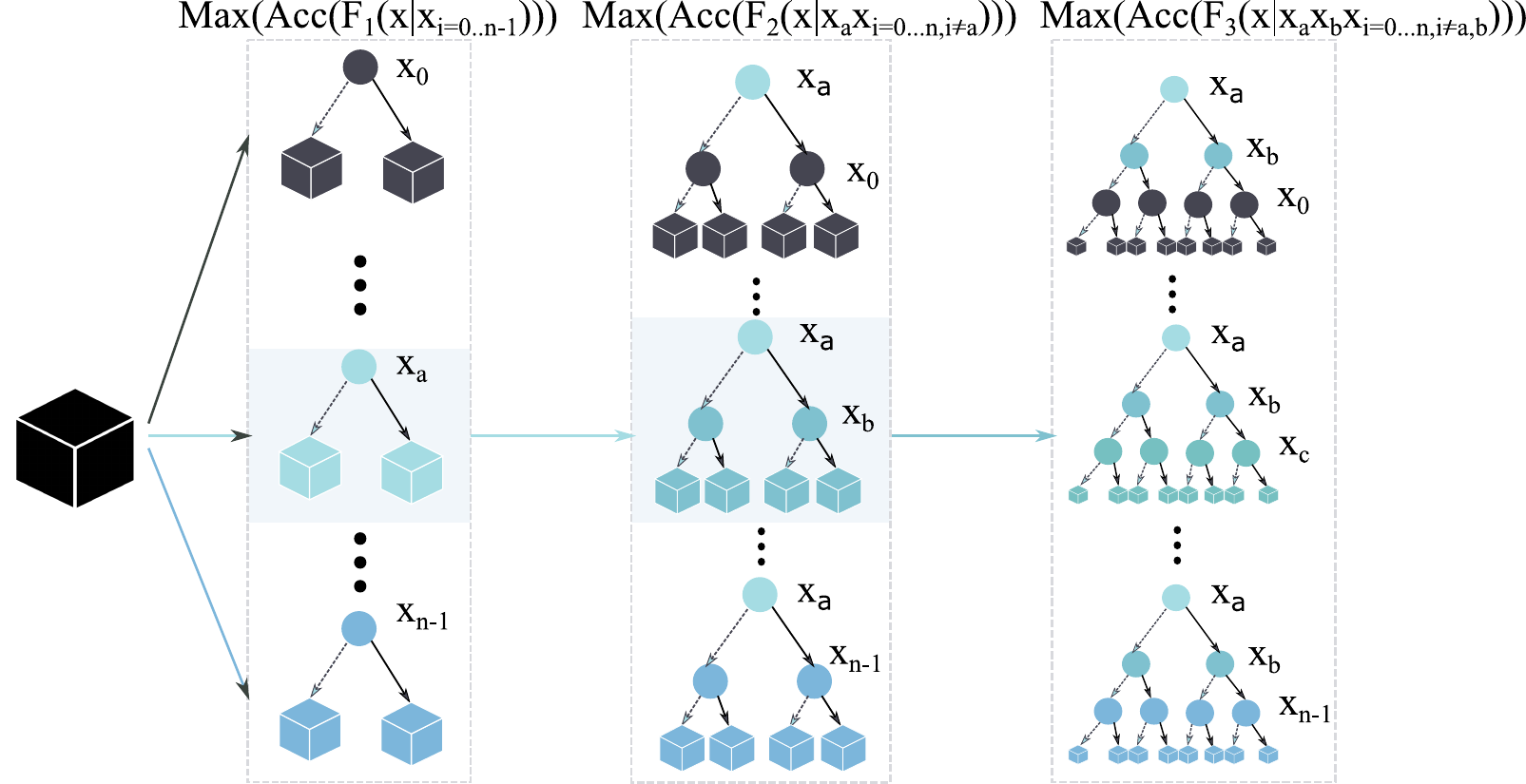}
   \caption{\textbf{An example of determining the expansion order within one BSD cluster.} In each iteration of BSD expansion, the input bit selected for expansion is decided by improving the accuracy the most. In the first iteration, $x_a$ is selected for expansion as the accuracy is improved the most by comparing the accuracy of expanding each input bit from $x_0$ to $x_{n-1}$. In the second iteration, the input bit $x_b$ is selected for expansion in the same manner.}
   \label{fig:mc2}
\end{figure}

\subsection{Expansion.} 
The \emph{expansion} stage is the key part for BSD generation with assured accuracy. The Boolean Expansion Theorem assures 100\% correctness on the given IO examples. For unseen samples, we use Monte Carlo-based policy to achieve almost 100\% correctness of BSD. 
In the $k$-th iteration, we approximate the accuracy of the BSD $Acc(\mathcal{F}_k)$ to the Hamming distance between $\mathcal{F}_k(\mathbf{x})$ and the target circuit based on sampled IO examples $\mathbf{x}$.
The accuracy $Acc(\mathcal{F}_{k+1})$ will be no less than $Acc(\mathcal{F}_k)$ after expansion using Theorem \ref{thm:decomacc}.
We select the input bit, which can mostly increase the accuracy of $Acc(\mathcal{F}_{k+1})$ for expansion (see Extended Data Fig.\ref{fig:mc2}).
Because the more the increase of accuracy after each expansion, the less the number of expansions, selecting the input bit, which mostly increases the accuracy, will reduce the number of expansions and improve the efficiency of building the BSD. In this way, we find the optimal expansion order using the approximated accuracy based on Hamming distance. When finishing the generating process of BSD, the expansion stage guarantees that the generated BSDs are definitely correct on the given IO examples. Since the IO examples are randomly sampled by Monte Carlo-based policy, these IO examples contain the property of the target circuit and guarantee a high accuracy when generalizing to unknown IO examples.

\begin{theorem}[The accuracy of BSD boosts after expansion]\label{thm:decomacc}
After expanding the generated BSD  $\mathcal{F}_k$ by any input bit $x_i$ to $\mathcal{F}_{k+1}$, the accuracy of expansion ended with $\mathcal{F}_k$ will be no larger than the accuracy of expansion ended with $\mathcal{F}_{k+1}$, that is, 
\begin{equation}\label{eq1}
    Acc(\mathcal{F}_k) \leq  Acc(\mathcal{F}_{k+1}).
\end{equation}
\end{theorem}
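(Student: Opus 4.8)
The plan is to reduce the claim to a \emph{local} statement about a single leaf of $\mathcal{F}_k$ and then sum over all leaves. First I would fix notation: write $Acc(\mathcal{F}) = 2^{-n}\,\bigl|\{\mathbf{x}\in\{0,1\}^n : \mathcal{F}(\mathbf{x})=\phi(\mathbf{x})\}\bigr|$ for the accuracy against the oracle $\phi$, and recall that a depth-$k$ BSD partitions the input cube into disjoint cylinders $S_L$, one per leaf $L$, where $S_L$ collects the inputs consistent with the path to $L$; on $S_L$ the BSD outputs the speculated constant $c_L\in\{0,1\}$. Hence $Acc(\mathcal{F}_k)=2^{-n}\sum_{L} a(L)$ with $a(L)=\bigl|\{\mathbf{x}\in S_L:\phi(\mathbf{x})=c_L\}\bigr|$ the number of agreements contributed by leaf $L$. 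The same decomposition applies to $\mathcal{F}_{k+1}$, so it suffices to compare contributions leaf by leaf.

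Next I would analyze one expansion step. Expanding at layer $k$ on input bit $x_i$ replaces a leaf $L$ by a test of $x_i$ whose two children are new leaves with cylinders $S_L^0=\{\mathbf{x}\in S_L: x_i=0\}$ and $S_L^1=\{\mathbf{x}\in S_L: x_i=1\}$; by the Boolean Expansion Theorem the refined object is again a legitimate BSD. The Monte Carlo speculation rule assigns to each new leaf the constant that best fits the outputs observed on its branch, which in the large-sample limit is the majority value of $\phi$ there, so the refined agreement counts are $a(L,0)=\max\bigl(|\{\mathbf{x}\in S_L^0:\phi(\mathbf{x})=0\}|,\ |\{\mathbf{x}\in S_L^0:\phi(\mathbf{x})=1\}|\bigr)$ and similarly $a(L,1)$ on the $x_i=1$ branch. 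Note the argument is insensitive to whether all layer-$k$ leaves are expanded on the same bit or on cluster-specific bits: each leaf is treated independently.

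The crux is the inequality $a(L,0)+a(L,1)\ge a(L)$ for every leaf $L$. This is immediate: keeping the old constant $c_L$ on \emph{both} children is one feasible assignment, contributing exactly $|\{\mathbf{x}\in S_L^0:\phi(\mathbf{x})=c_L\}|+|\{\mathbf{x}\in S_L^1:\phi(\mathbf{x})=c_L\}| = a(L)$ agreements, and the majority rule maximizes the agreement count on each branch independently, so $a(L,0)\ge|\{\mathbf{x}\in S_L^0:\phi(\mathbf{x})=c_L\}|$ and $a(L,1)\ge|\{\mathbf{x}\in S_L^1:\phi(\mathbf{x})=c_L\}|$. Summing over all leaves and dividing by $2^n$ gives $Acc(\mathcal{F}_{k+1})\ge Acc(\mathcal{F}_k)$; since this holds for \emph{any} expansion bit, it in particular covers the bit chosen by the expansion stage (the one maximizing the accuracy increase). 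The edge case where $x_i$ already lies on the path to $L$ is harmless: one child cylinder is empty, the other equals $S_L$, and the inequality holds with equality.

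The step I expect to be the main obstacle is bridging the finite-sample speculation actually used and the idealized majority rule invoked above: with finitely many Monte Carlo samples the constant chosen for a new leaf may differ from the true majority value, so the clean per-branch inequality holds exactly only for the \emph{empirical} accuracy on the drawn samples. To recover the statement for the true $Acc$, I would either phrase monotonicity in terms of empirical accuracy (where it is exact) or combine the empirical inequality with a concentration bound showing empirical and true accuracy agree up to a vanishing term as the sample size grows---the same device underlying the merging-error analysis of Theorem 2---so that monotonicity transfers to $Acc$ up to $o(1)$ error.
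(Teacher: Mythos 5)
Your proof is correct and takes essentially the same route as the paper: the paper also decomposes accuracy leaf by leaf, writes the parent's accuracy as $\max(Q_0,Q_1)$ under the majority/speculation rule, and derives the per-leaf inequality from $\max(P_0Q_0^0+P_1Q_0^1,\,P_0Q_1^0+P_1Q_1^1)\le P_0\max(Q_0^0,Q_1^0)+P_1\max(Q_0^1,Q_1^1)$, which is exactly your ``keeping the old constant on both children is feasible, and the per-branch majority can only do better'' argument in proportion form. Your closing remark about the gap between finite-sample speculation and the idealized majority rule is a fair point that the paper's proof silently elides by identifying $Acc(f_k)$ with $\max(Q_0,Q_1)$.
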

\begin{proof}
Consider a node $f_k$ in $\mathcal{F}_k$ when the expansion is ended with $\mathcal{F}_k$.
Denote $Q_0(f_k)$ and $Q_1(f_k)$ as the proportions that the value of $f_k$ are equal to 0 and 1, respectively. Thus, the larger one between $Q_0(f_k)$ and $Q_1(f_k)$ decides the value of $f_k$
\begin{equation}\label{eq2}
    Acc(f_k)  =  \max(Q_0(f_k), Q_1(f_k)).
\end{equation}
We continue expand $f_k$ to $f_{k+1}^{0}$ and $f_{k+1}^{1}$ with a selected input bit $x_i$ assigning to $0$ and $1$, respectively. Denote $P_0(f_k)$ and $P_1(f_k)$ as the proportions that $f_k$ are expanded with $x_i$ assigning to $0$ and $1$, respectively, then we have
\begin{equation}
    \begin{array}{lll}
    & &Acc(\{f_{k+1}^{0}, f_{k+1}^{1}\}) \\
    &=& P_0(f_k) Acc(f_{k+1}^{0}) + P_1(f_k) Acc(f_{k+1}^{1}) \\ 
    &=& P_0(f_k) \max(Q_0(f_{k+1}^{0}), Q_1(f_{k+1}^{0})) + P_1(f_k) \max(Q_0(f_{k+1}^{1}), Q_1(f_{k+1}^{1})).
    \end{array}
\end{equation}
We also have
\begin{equation}\label{eq4}
    Q_0(f_k) = P_0(f_k)Q_0(f_{k+1}^0) + P_1(f_k)Q_0(f_{k+1}^1),
\end{equation}
\begin{equation}\label{eq5}
    Q_1(f_k) = P_0(f_k)Q_1(f_{k+1}^0) + P_1(f_k)Q_1(f_{k+1}^1).
\end{equation}
Combine Eqn. \eqref{eq2} \eqref{eq4} \eqref{eq5}, we can get
\begin{equation}\label{eq6}
\begin{array}{lll}
Acc(f_k) & = & 
\max(P_0(f_k)Q_0(f_{k+1}^0) + P_1(f_k)Q_0(f_{k+1}^1), P_0(f_k)Q_1(f_{k+1}^0) + P_1(f_k)Q_1(f_{k+1}^1)) \\
& \leq & \max(P_0(f_k)Q_0(f_{k+1}^0), P_0(f_k)Q_1(f_{k+1}^0)) + \max(P_1(f_k)Q_0(f_{k+1}^1), P_1(f_k)Q_1(f_{k+1}^1)) \\
& = & P_0(f_k) \max(Q_0(f_{k+1}^{0}), Q_1(f_{k+1}^{0})) + P_1(f_k) \max(Q_0(f_{k+1}^{1}), Q_1(f_{k+1}^{1})) \\
& = & Acc(\{f_{k+1}^{0}, f_{k+1}^{1}\}).
\end{array}
\end{equation}
When we apply Eqn. \eqref{eq6} to all nodes in $\mathcal{F}_k$, we can directly obtain Eqn. \eqref{eq1}.
\end{proof}

\subsection{Merging.} 
The \emph{merging} stage eliminates the redundancy of isomorphic subgraphs in the generated BSD and reduces both the computing overhead and BSD sizes. After the stage of \emph{expansion}, the merging stage merges the similar leaf nodes, which are determined by the Boolean distance. Specifically, two leaf nodes $f_{(k+1),p}$ and $f_{(k+1),q}$ whose outputs are exactly equal, i.e. their Hamming distance is zero $f_{(k+1),p}(\mathbf{x})=f_{(k+1),q}(\mathbf{x})$, will be merged. Although the determination of merging with Monte Carlo sampling is not exactly accurate and may bring error after merging, using Theorem \ref{thm:reductionerror} we can guarantee that the merging error is controllable and will converge to zero when we increase the number of sampling.

\begin{theorem}[The convergence of merging error]\label{thm:reductionerror}
When we perform the merging stage during the process of generating BSD, the probability that the error of generated BSD larger than $\delta$ will no more than $\frac{T}{K\delta}$, where $T$ is the total number of merging stages, $K$ is the number of IO examples during calculating the Boolean distance, and $\delta$ is a very small value. Thus, the probability that the error of generated BSD is larger than a very small value will converge to zero by increasing the number of sampled IO examples.
\begin{proof}
Assuming in one merging stage, we merge two leaf nodes that correspond to two not exactly the same Boolean functions. We denote $r$ as the proportion of incorrect results of merged nodes after the merging stage, namely the proportion of different results of the two corresponding Boolean functions given the same input. Denote $R$ as the error rate of merged nodes after any merging stage, then the distribution of $R$ is
\begin{equation}
    Pr(R) = \left\{
    \begin{array}{ll}
        (1-r)^K & R = r \\
        1-(1-r)^K & R = 0 \\
    \end{array}
    \right..
\end{equation}
So the expected value of $R$ is
\begin{equation}
    E(R) = r(1-r)^K.
\end{equation}
Considering the range of $r$ is $(0, \frac{1}{2}]$, $E(R)$ increases monotonically at $r\in(0,\frac{1}{K+1})$, and decreases monotonically at $r\in(\frac{1}{K+1},\frac{1}{2}]$. Therefore, $E(R)$ takes the maximum value $\frac{1}{K+1}\left(\frac{K}{K+1}\right)^K$ when $r$ takes $\frac{1}{K+1}$, so
\begin{equation}
    E(R) \leq \frac{1}{K+1}\left(\frac{K}{K+1}\right)^K < \frac{1}{K}.
\end{equation}
Given a small value $\delta$, using Markov's inequality, we can obtain
\begin{equation}\label{eq11}
    Pr(R\geq\delta) \leq \frac{E(R)}{\delta} < \frac{1}{K\delta}.
\end{equation}
Eqn. \eqref{eq11} means that in each merging stage, the probability that the error rate after merging two leaf nodes larger than $\delta$ is less than $\frac{1}{K\delta}$. Thus, the probability that the accuracy of a merged node larger than $1-\delta$ is no less than $1-\frac{1}{K\delta}$. When finishing the process of generating the BSD, which contains $T$ merging stages, the probability that the accuracy of the final BSD larger than $1-\delta$ is no less than $(1-\frac{1}{K\delta})^T$ which is no less than $1-\frac{T}{K\delta}$. In conclusion, the probability that the error of generated BSD is larger than $\delta$ will be no more than $\frac{T}{K\delta}$.
\end{proof}
\end{theorem}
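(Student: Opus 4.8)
The plan is to isolate the effect of a single merging operation, bound the probability that it corrupts the BSD, and then aggregate over all $T$ merges. First I would fix attention on one merging stage in which two leaf nodes, approximated on the sampled IO examples as the same $1$-bit constant, are declared equal and merged, and let $r$ denote the true proportion of inputs on which the two underlying Boolean functions actually disagree. Because each leaf node is speculated to be the dominant constant value of its function, the disagreeing fraction cannot exceed $\tfrac12$; and if $r=0$ the merge is exact and introduces no error. Hence I may assume $r\in(0,\tfrac12]$. This reduction is the conceptual core of the argument: it replaces a global combinatorial question about the entire BSD with a one-parameter probabilistic estimate.

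Next I would model the randomness of the Monte Carlo check. Treating the $K$ IO examples used to compute the Boolean distance as i.i.d.\ draws, each sample independently witnesses the disagreement with probability $r$, so the two nodes are (wrongly) merged precisely when all $K$ samples happen to agree, an event of probability $(1-r)^K$. Writing $R$ for the resulting error rate of the merged node, we get $R=r$ with probability $(1-r)^K$ and $R=0$ otherwise, so $E(R)=r(1-r)^K$. A short calculus step — differentiating $r(1-r)^K$, locating its unique critical point at $r=\tfrac1{K+1}$ (which lies in $(0,\tfrac12]$ for $K\ge1$), and checking it is a maximum — yields $E(R)\le \tfrac1{K+1}\bigl(\tfrac{K}{K+1}\bigr)^K<\tfrac1K$. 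Markov's inequality then gives, for a small $\delta>0$, $\Pr(R\ge\delta)\le E(R)/\delta<\tfrac1{K\delta}$, i.e.\ a single merge keeps the node accuracy above $1-\delta$ with probability at least $1-\tfrac1{K\delta}$.

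Finally I would combine the $T$ merging stages: since each merge independently (or, conservatively, by a union bound) fails the $(1-\delta)$ guarantee with probability at most $\tfrac1{K\delta}$, all $T$ of them succeed with probability at least $\bigl(1-\tfrac1{K\delta}\bigr)^T\ge 1-\tfrac{T}{K\delta}$ by Bernoulli's inequality, giving the claimed bound $\Pr(\text{error of BSD}\ge\delta)\le \tfrac{T}{K\delta}$, which tends to $0$ as $K\to\infty$.

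I expect the delicate step to be the last one. The per-merge estimate controls the error each merge introduces \emph{individually}, but a priori the $T$ errors could accumulate, so one must argue that the final BSD error is governed by the worst single bad merge rather than by their sum. I would handle this by noting that in the overwhelmingly likely event that every merge is exact ($r=0$) no error enters at all, and pushing the remaining low-probability events through the union bound above; if a cleaner constant is desired one can instead apply the per-merge estimate with $\delta$ replaced by $\delta/T$. The only other point worth stating explicitly — though routine — is the i.i.d.\ assumption on the Monte Carlo samples that underlies the $(1-r)^K$ merge probability.
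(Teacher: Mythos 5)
Your proposal follows essentially the same route as the paper's own proof: the same two-point distribution for the per-merge error rate $R$ with $E(R)=r(1-r)^K$, the same calculus bound $E(R)<1/K$ at $r=1/(K+1)$, Markov's inequality for the per-merge tail, and the same $(1-\frac{1}{K\delta})^T\ge 1-\frac{T}{K\delta}$ aggregation over the $T$ merges. Your closing caveat about error accumulation across merges (versus the worst single merge) is a real looseness that the paper's proof also glosses over, so flagging it is fair, but it does not constitute a different argument.
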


\section*{An illustrative example}
Here we use the 8-bit adder as an illustrative example to detail the process of the proposed BSD generation algorithm, including partition, expansion, and merging (see Fig.\ref{fig:illustrative example}A), where $c={c_8,..,c_0}$ are the target Boolean functions of corresponding output bits of adding two 8-bit inputs $\{a_7,..,a_0\}$ and $\{b_7,...,b_0\}$. 

\begin{itemize}[itemsep=0pt,topsep=0pt,leftmargin=5pt]
\item{\textbf{Partition.}} First, all the target Boolean functions computing output bits (black nodes in Fig. ~\ref{fig:illustrative example}) are partitioned into different groups, e.g., the two most significant bits (MSB), $c_8$ and $c_7$, are partitioned in one group. As stated, the partition is performed based on the Boolean distance between the corresponding output bits, and a larger distance means that there are more similarities and merging possibilities for the output bits. In the cluster consisting of $c_8$ and $c_7$, the distance between $c_8$ and $c_7$ is $Dist = C_\Omega(c_8) + C_\Omega(c_7) - C_\Omega(c_8, c_7) = 23+43-46 = 20$, which is the largest distance result among all of the possible 2-bit clusters. For example, if $c_8$ and $c_4$ are partitioned into one cluster, the corresponding distance is $Dist = C_\Omega(c_8) + C_\Omega(c_4) - C_\Omega(c_8, c_4) = 23+25-37 = 11 < 20$. With the help of $Dist$, the proposed method partitions the BSD nodes into clusters with more merging possibilities and thus improves the following merging efficiency to obtain compact Boolean functions.
\item{\textbf{Expansion.}} Second, each output bit is expanded into two sub-functions on one input bit. For example, the most significant bit, $c_8$, is expanded into two sub-functions, $c_{8|a_0=0}=\{a_7,..,\mathbf{0}\}+\{b_7,...,b_0\}$ and $c_{8|a_0=1}=\{a_7,..,\mathbf{1}\}+\{b_7,...,b_0\}$, and thus $c_8=c_{8|a_0=0}+c_{8|a_0=1}$. After this expansion, the 16-bit-input 1 bit-output function $c_i$ is presented with two 15-bit-input 1 bit-output sub-functions, i.e., $c_{i|a_0=0}$ and $c_{i|a_0=0}$, $i\in{0,..,8}$. These sub-functions will be further expanded on another input bit to generate four 14-bit-input 1-bit-output sub-functions. According to \textbf{Theorem 1}, the accuracy of the BSD keeps increasing with more expansion layers. To improve the accuracy more efficiently, it is important to decide which bit to expand. In our method, the bit expansion order is the same within each partitioned cluster. In the cluster consisting of $c_8$ and $c_7$, the first two bits to expand are $a_7$ and $b_7$, the MSB of the operands, because the speculation accuracy increases the most by expanding these bits. Concretely, we use Hamming distance to approximate the accuracy improvement. For every expanded BSD, we sample 400 IO examples to obtain an output vector with 400 Boolean values and then calculate the Hamming distance between this output vector with that of the function without expansion. For the first layer, the expansion variable is $a_7$, with the largest Hamming distance of 155. After the expansion of the first layer, there are no merging possibilities in the sub-functions, and then we expand the second layer. For the second layer, the expansion variable is $b_7$, with the largest Hamming distance of 155 as well.

\item{\textbf{Merging.}} Third, sub-functions in a cluster can be merged when their Hamming distance is zero, which means that the functions are equal. In this case, for every sub-function, we sample 10000 IO examples to obtain a Boolean vector with 10000 Boolean values. Then, sub-functions belong to $c4$ with Hamming distance of 0 are merged, where
$c_{4|a_4=0,b_4=0}$ and $c_{4|a_4=1,b_4=1}$ are merged (orange dotted lines), and $c_{4|a_4=0,b_4=1}$ and $c_{4|a_4=1,b_4=0}$ are merged (blue dotted lines). Moreover, sub-functions do not belong to the same output bit but the same cluster can also be merged if their Hamming distance is 0. Thus, sub-functions belong to $c_8$ and $c_7$ are merged, where
$c_{8|a_7=0,b_7=0}$ and $c_{8|a_7=1,b_7=1}$ are merged into constant function 0/1 (black dotted lines),$c_{8|a_7=1,b_7=0}$ ,$c_{8|a_7=0,b_7=1}$,$c_{7|a_7=1,b_7=0}$and $c_{7|a_7=0,b_7=1}$ are merged (orange dotted lines), and $c_{7|a_7=0,b_7=0}$ and $c_{7|a_7=1,b_7=1}$ are merged (blue dotted lines). Such merging significantly reduces the total nodes for the final function. In the partitioned cluster with $c_8$ and $c_7$, the 8 nodes at layer 2 are merged into 2 nodes. In the partitioned cluster with only one output bit $c_4$, the 4 leaf nodes at layer 2 are also merged into 2 nodes.
\end{itemize}
By repeating the above partition, expansion, and merging stages, the final compact BSD representation with 139 nodes can be obtained (see Fig.\ref{fig:illustrative example}B). As a comparison, without the proposed partition and merging stage, and the variable orders are randomly chosen during expansion, the final BSD will contain about $2^{16}=65536$ BSD nodes.

 \begin{figure}[t]
   \centering
   \includegraphics[width=0.5\columnwidth]{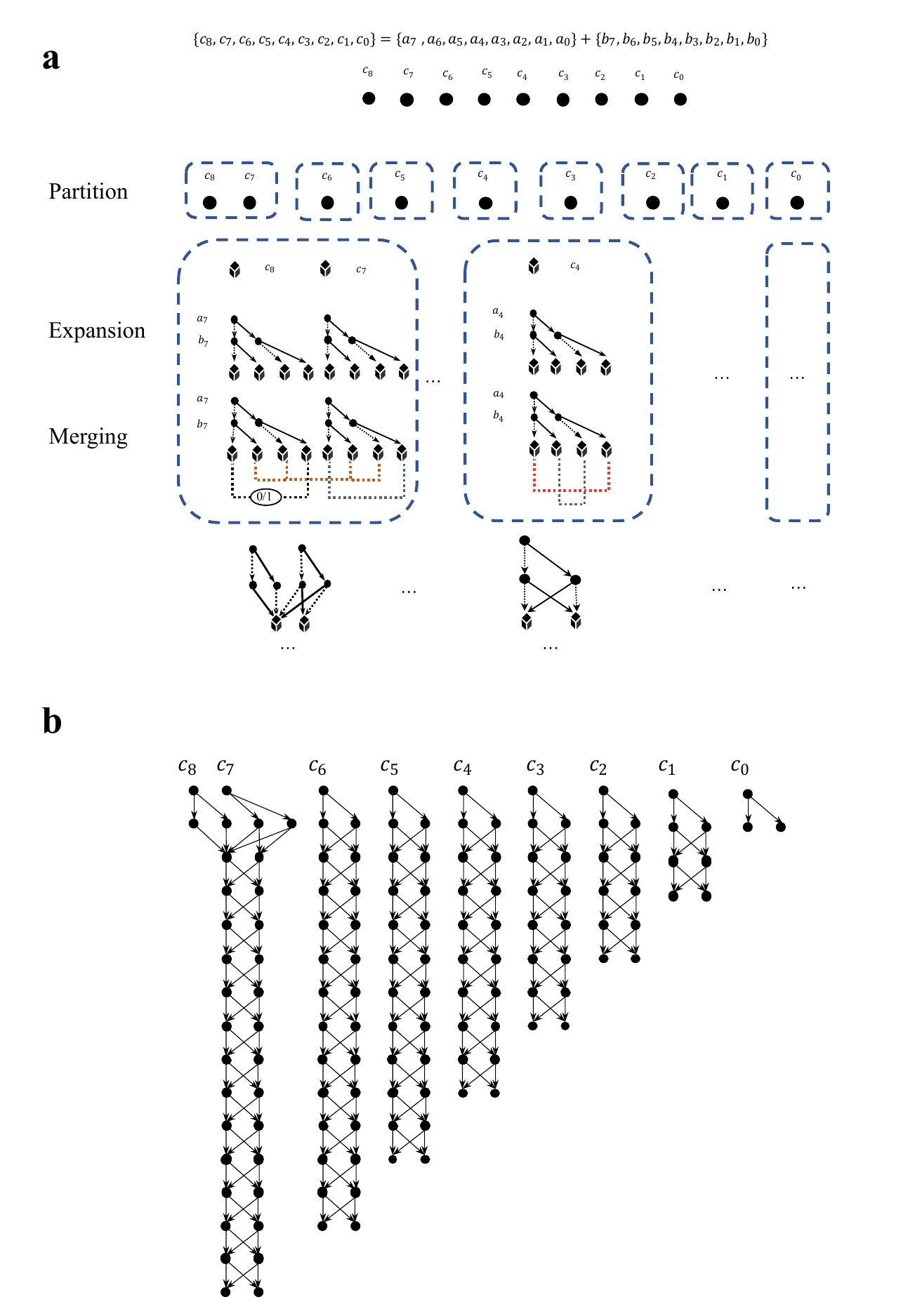}
   \caption{
   \textbf{An 8-bit full adder example of BSD generation.} \textbf{a}, BSD generation process of the 9-bit output with partition, expansion, and merging. \textbf{b}, The final BSD presentation of the 8-bit full adder. }
   \label{fig:illustrative example}
 \end{figure}

\section*{Evaluation}

In this section, we introduce detailed settings and the main results of automated CPU design.

\subsection{Detailed design configurations.}

\emph{Data for BSD generation:}
The IO examples for automated CPU design are directly borrowed from the verification of traditional design flow. There are two main kinds of IO examples for traditional verification. The first kind is the randomly generated input stimuli and their corresponding outputs, which can be easily obtained from a functional simulator\cite{bellard2005qemu} or a CPU with the identical ISA (instruction set architecture). The second kind is the input-outputs of legacy programs with particular functionalities, such as high-quality test cases. We start from RISC-V official ISA documents and end with a verification-passed GTECH netlist after synthesis. 

\emph{Algorithmic parameters}: During the generation of circuit logic in the form of BSD, there are multiple parameters to set. For the partition stage, the maximal number of clusters for each output bit is set as $10$ to control the scale of the generated BSD. For the expansion stage, the maximal width of BSD is set as $10,000$ to balance the BSD accuracy and expansion efficiency. For the merging stage, to determine the similarity between different nodes, the maximal sampling number per node is set as $1,000,000$ to balance the reduction accuracy and sampling efficiency. The implemented program is executed on a Linux cluster including 68 servers, each of which is equipped with 2 Intel Xeon Gold 6230 CPUs.

\emph{Hardware implementation:}

After the generation of BSD, we tape out the CPU chip to demonstrate the effectiveness of our approach. Specifically, the generated BSD is converted into a Verilog module by traversing every node in it. Hence, the BSD generation is compatible with the existing back-end design flow.
We use commercial tools to transform the Verilog module to a GTECH netlist, better supported by back-end EDA flows. We verify our output netlist on the FPGAs and tape out the chip with 65nm technology. The automatically designed CPU was sent to the manufacturer in December 2021.

\begin{table*}
  \centering
  \footnotesize
  \caption{Hardware characteristics of the generated CPU.}
  \label{tab:hwchar}
  \begin{tabular}{lllll}
    \toprule
    \textbf{Component} & \textbf{Area ($um^2$)} & \textbf{(\%)} & \textbf{Power (mW)} & \textbf{(\%)} \\
    \midrule

    CPU core & 275933.53&&14.46 &\\

    \midrule

    Combinational &264476.88&95.85&8.70&60.14\\

    Register &11456.64&4.15&5.76&39.86\\

    \bottomrule   
 \end{tabular}
\end{table*}

\subsection{Functionality and performance.}

To validate both the functionality and performance of the generated CPU before fabrication, we use multiple FPGAs to guarantee that the accuracy of $>99.99999999999\%$ is achieved for the validation tests, which consist of more than one billion legacy instructions. Concretely, the validation tests are mainly obtained from various real-world programs, including Linux, SPEC CINT2000, and Dhrystone (see Extended Data Fig.\ref{fig:spec_dhrystone}).  Extended Data Table~\ref{tab:hwchar} reports the area and power of the generated CPU core, where most of the circuits are implemented in combinational logic. On Dhrystone, the performance of generated CPU is comparable to that of the human-designed Intel 80486SX CPU (see Fig.\ref{fig:func_perf}b). 

\begin{figure}[t]
  \centering
  \includegraphics[width=\columnwidth]{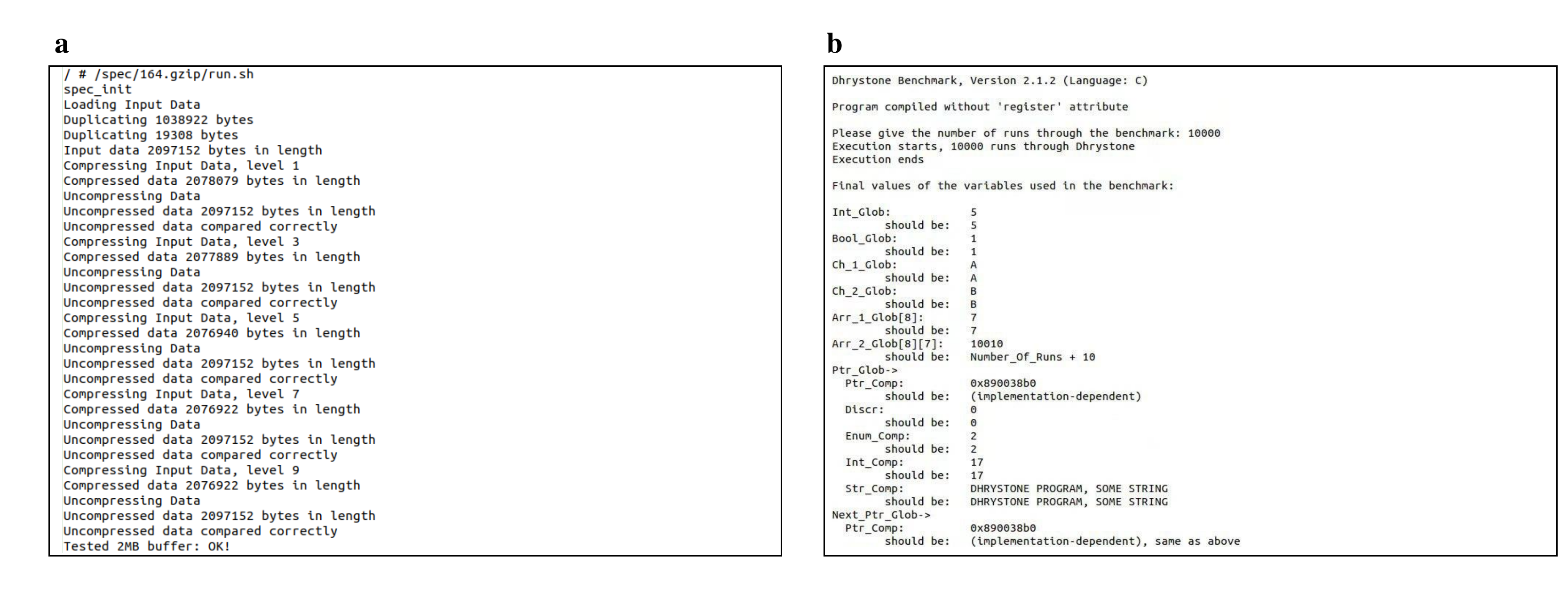}
  \caption{\textbf{More functional validation results.} \textbf{a}, The outputs of running one program (i.e., 164.gzip) of the SPEC CINT 2000 benchmark. \textbf{b}, The outputs of running the Dhrystone benchmark.}
  \label{fig:spec_dhrystone}
\end{figure}

Compared with traditional CPU design, the CPU designed by AI saves significant human efforts and time overhead. In traditional CPU design, human engineers do huge amounts of work for the functional and architecture design, Verilog HDL coding, and verification. For the K486 CPU (an Intel 486 compatible CPU), even the verification stage takes $190$ days\cite{yim1997design}, not to mention the time overhead of the entire design cycle. It takes increasingly more time overhead for modern CPUs with an even larger scale. In this work, we generate a BSD-based CPU core in about $5$ hours, which is about $1000\times$ reduction over our human-designed CPU core. 
To guarantee the apples-to-apples comparison of the design time, we only count the time consumption of working on the CPU core and do not include other parts in the SoC (System-on-Chip) or any back-end circuit design.

\subsection{Comparison with state-of-the-art.}

Emerging studies try to generate circuits based on machine-learning-based methods, and Nvidia announced a state-of-the-art study to generate an adder tree based on reinforcement learning (RL) methods\cite{roy2021prefixrl}.
However, the adder tree is significantly smaller than a CPU-scale circuit. The circuit logic of only one output bit of the generated CPU in one partition cluster contains $33,224$ gates, while that of the adder tree in Nvidia's work only contains $118$ gates (see Extended Data Fig.\ref{fig:BSD-RL}).
In fact, our approach is able to handle the circuit logic with $\sim33,000\times$ larger sizes than the RL-based method because the RISC-V CPU has more than $4$ million logic gates.

\begin{figure}[t]
  \centering
  \includegraphics[width=.9\columnwidth]{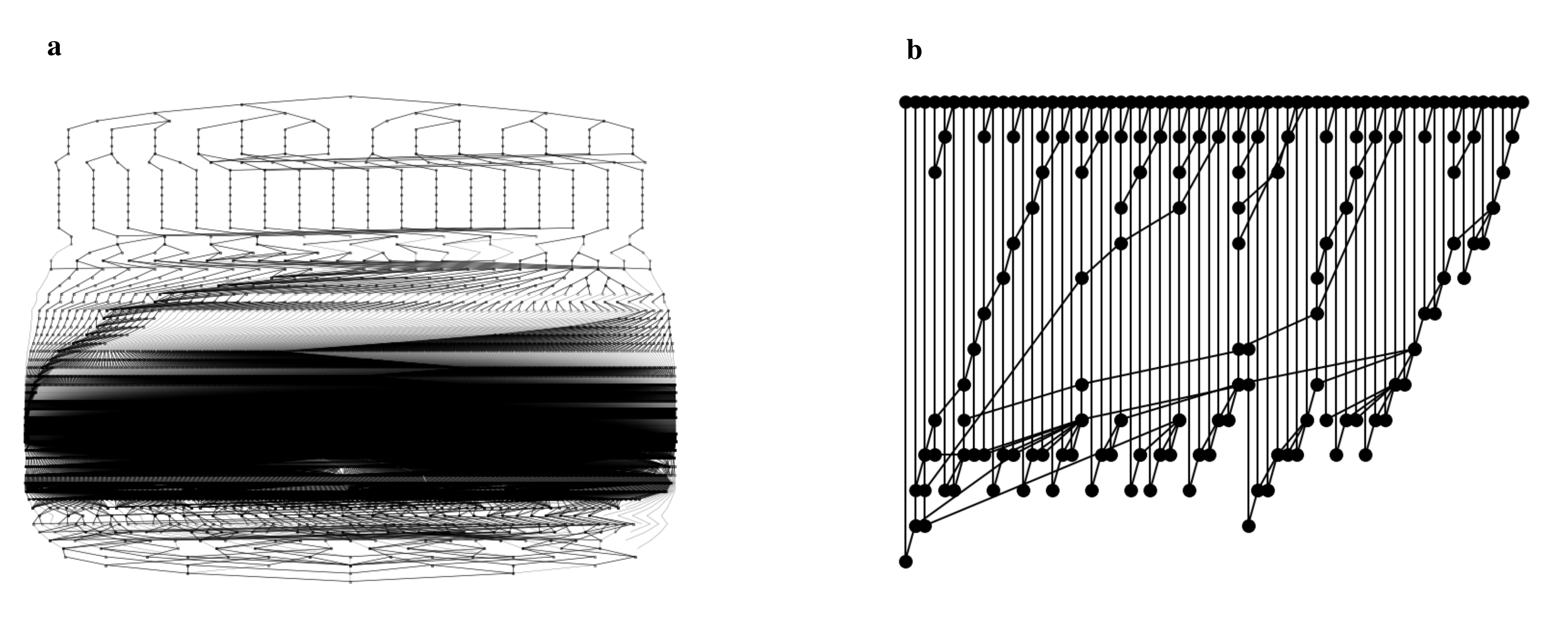}
  \caption{\textbf{Comparison of generated circuit logic of our approach and state-of-the-art RL-based approach.} \textbf{a}, A small part of the generated RISC-V CPU: one output bit in one partition cluster contains $33,224$ gates. \textbf{b}, The circuit logic of a 64-bit input adder only contains $118$ gates.}
  \label{fig:BSD-RL}
\end{figure}

\section*{Related work}

Machine design has long been a focus of the artificial intelligence and engineering community. Modern AI techniques have achieved substantial successes in the different design areas spanning from real-world substances like 
materials\cite{Liu2017MaterialsDA, Tabor2018AcceleratingTD, Saal2020MachineLI, Lee2022MethodologicalFF,raccuglia2016machine}, 
proteins\cite{russ2020evolution,Kuhlman2019AdvancesIP, Chen2019ToIP, Pan2021RecentAI}, 
and drugs\cite{Schneider2018AutomatingDD, ztrk2020ExploringCS, Vamathevan2019ApplicationsOM, Chen2018TheRO, Elton2019DeepLF, Stokes2020ADL, JimenezLuna2020DrugDW} 
to symbolic formulations like 
computer programs\cite{gulwani2017program, chaudhuri2021neurosymbolic, doi:10.1126/science.abq1158 }. 
However, previous work stays in the stage of designing relatively small objects.
Specifically, for materials, database OQMD\cite{Kirklin2015TheOQ} that discovered materials usually consists of tens of atoms, while the human-designed Ultra-high-molecular-weight polyethylene consists of millions of atoms\cite{kurtz2004ultrahigh}. For proteins, the AroQ family of chorismate mutases consists of $\approx100$ amino acids\cite{russ2020evolution}, while the human-designed protein, an artificial enzyme dubbed d-Dpo4-3C, consists of $352$ d-amino acids\cite{pech2017thermostable}. For drugs, one can estimate from the commonly-used database like GDB-17 and ZINC that the number of heavy atoms in a single molecule is usually $<100$\cite{polishchuk2013estimation, irwin2012zinc}, while the human-designed drug, i.e., Adalimumab, the first fully human monoclonal antibody approved by the US Food and Drug Administration (FDA),  contains $20,067$ atoms\cite{bang2004adalimumab}.
For computer programs, the machine-designed programs are either short programs within hundreds of characters in narrowly defined domain-specific languages\cite{Trivedi2021LearningTS}, or competitive-level programs with up to $2,304$ characters consisting of natural language descriptions and general-purpose programming languages\cite{doi:10.1126/science.abq1158}, which are still far smaller than real-world applications. Besides, the employed neural networks cannot guarantee the accuracy of the generated program. In contrast, the RISC-V CPU automatically designed by our approach requires exploring a search space of unprecedented size $10^{10^{540}}$, thus resulting in the combination of more than $4$ million 
logic gates and the CPU performs comparably against the human-designed commercial CPU as well.
Therefore, in terms of both the size of search spaces and the number of individual components, the CPU is the largest object, to our best knowledge, that can be designed by machines at present. Besides, these AI substances do not require high accuracy as the proposed process. These problems do not have a strict verification for their output design. For example, the average accuracy for protein designing is about 90$\%$, and the accuracy for computer program generation is about 66$\%$.

As one of the most important applications of machine design, automated circuit design has attracted attention from both academia and industry since the 1960s. Prior to the hardware programming languages, circuits are designed manually by manipulating the True Table or Karnaugh Map with hand-implemented circuit schemes. The hardware description languages (HDL) are then proposed, e.g., Verilog and VHDL in the 1980s, to help circuit designers by allowing them to focus on functional behaviours at register-transfer-level (RTL) abstraction.
To further raise the level of abstraction to boost CPU design, the HLS tools emerged to generate RTL description (e.g., HDL) from behavior specifications (e.g., C/C++ programs). There are three generations of HLS tools\cite{Martin09DT}. The first generation was developed as research prototypes in the 1980s\cite{paulin1989force}. It took domain-specific languages such as Silage as input and aimed at the design of a specific domain, i.e., signal processing\cite{de1986cathedral}. The second generation was available commercially in the mid-1990s\cite{elliott1999understanding}. Major EDA companies released their own products including Behavior Compiler\cite{knapp1996behavioral}, Visual Architect\cite{elliott1999understanding}, and Monet\cite{elliott1999understanding}. These tools still used the behavior HDL as input and thus were not widely adopted. Since the early 2000s, the third generation has been introduced by many vendors, and most of them used C, C++, SystemC, or Matlab as input\cite{nane2015survey,pursley2017high,HDLCoder}. This generation is capable of producing both dataflow and control logic with reasonable performance. Recently, machine learning, especially deep learning techniques, have been employed to improve the efficiency of HLS, e.g., HLS quality estimation\cite{dai_fast_2018,zhao_machine_2019,makrani_xppe_2019}, circuit performance predication\cite{ferianc_improving_2020,mohammadi_makrani_pyramid_2019,ustun_accurate_2020,yanghua_improving_2016}, overhead estimation\cite{mohammadi_makrani_pyramid_2019}, search space optimization\cite{kim_machine_2018,mahapatra_machine-learning_2014,wang_machine_2020,liu_efficient_2016,liu_learning-based_2013,meng_adaptive_2016}. 

With the breakthrough of machine learning and the increasing complexity of circuit design, machine learning techniques have been employed in different stages of the entire chip design flow\cite{huang2021machine}. Google used the AI method to design chip floorplans faster than humans based on reinforcement learning (RL)\cite{Mirhoseini21Nature}. Nvidia also presented an RL-based approach to design parallel prefix circuits such as adders or priority encoders\cite{roy2021prefixrl}. There is also research applying techniques such as deep convolutional neural networks (CNNs) and graph neural networks (GNNs) in the areas of automatic design space exploration, power analysis, VLSI physical design, and analog design\cite{khailany2020accelerating,venkatesan2019magnet,zhou2019primal,ren2020paragraph,lin2020abcdplace}.
As a widely used approximate algorithm to reduce complexity, the employed Monte-Carlo simulation has been used in other scenarios instead of circuit design. For example, in the game of Go, the Monte Carlo Tree Search is proposed in state-of-the-art AI methods such as AlphaGo\cite{Silver17Nature}. However, our proposed method completely differs from conventional Monte-Carlo-based game tree problems tackled in the current AI method. Though both of them utilize the basic concept of classic Monte-Carlo Simulation, AlphaGo uses Monte-Carlo Simulation to sample the game trees and learn the game strategy with neural networks, while our approach uses Monte-Carlo Simulation to sample the input-output examples and learn the circuit design with the proposed BSD.

All existing HDL/HLS tools and ML-based approaches require a formal description of the circuit logic, such as C, Verilog, and netlist, while our approach completely eliminates the manual effort to develop such formal inputs. Instead, the circuit design is automated by directly reusing empirical IO examples in the traditional verification process.

\label{sec:method}
\end{methods}












\end{document}